\documentclass[conference]{IEEEtran}
\IEEEoverridecommandlockouts
\usepackage{cite}
\usepackage{amsmath,amssymb,amsfonts}
\usepackage{algorithmic}
\usepackage{graphicx}
\usepackage{subfigure}
\usepackage{textcomp}
\usepackage{xcolor}
\usepackage{booktabs}
\usepackage{multirow}
\usepackage{multicol}
\usepackage[ruled,linesnumbered]{algorithm2e}
\usepackage{makecell}
\usepackage{bbding}
\usepackage{colortbl}
\usepackage{diagbox}

\usepackage{amsthm}     
\usepackage{tcolorbox}
\tcbuselibrary{theorems}

\newtheorem{theorem}{Theorem}[subsection]
\newtheorem{definition}{Definition}[subsection]

\tcolorboxenvironment{theorem}{
  boxrule=0.5pt,
  boxsep=0pt,
  left=4pt,
  right=4pt,
  top=3pt,
  bottom=3pt,
  colback=white 
}

\tcolorboxenvironment{definition}{
  boxrule=0.5pt,
  boxsep=0pt,
  left=4pt,
  right=4pt,
  top=3pt,
  bottom=3pt,
  colback=white 
}

\def\BibTeX{{\rm B\kern-.05em{\sc i\kern-.025em b}\kern-.08em
    T\kern-.1667em\lower.7ex\hbox{E}\kern-.125emX}}

\begin{document}

\title{ComPrivDet: Efficient Privacy Object Detection in Compressed Domains Through Inference Reuse}

\author{Yunhao Yao, Zhiqiang Wang, Ruiqi Li, Haoran Cheng, Puhan Luo, Xiangyang Li,~\IEEEmembership{Fellow,~IEEE} \\
University of Science and Technology of China (USTC)}

\maketitle

\begin{abstract}
As the Internet of Things (IoT) becomes deeply embedded in daily life, users are increasingly concerned about privacy leakage, especially from video data.
Since frame-by-frame protection in large-scale video analytics (e.g., smart communities) introduces significant latency, a more efficient solution is to selectively protect frames containing privacy objects (e.g., faces).
Existing object detectors require fully decoded videos or per-frame processing in compressed videos, leading to decoding overhead or reduced accuracy. 
Therefore, we propose ComPrivDet, an efficient method for detecting privacy objects in compressed video by reusing I-frame inference results. 
By identifying the presence of new objects through compressed-domain cues, ComPrivDet either skips P- and B-frame detections or efficiently refines them with a lightweight detector.
ComPrivDet maintains $99.75\%$ accuracy in private face detection and $96.83\%$ in private license plate detection while skipping over $80\%$ of inferences.
It averages $9.84\%$ higher accuracy with $75.95\%$ lower latency than existing compressed-domain detection methods.
\end{abstract}

\begin{IEEEkeywords}
privacy object detection, inference reuse
\end{IEEEkeywords}

\section{Introduction}
\label{sec:intro}

The growth of IoT has enabled many daily applications, such as smart homes and smart communities~\cite{yao2025trafficdiary,yao2023traffic}.
However, these applications also raise significant privacy concerns, especially in video data that often contains personal information such as faces and license plates~\cite{yao2024secoinfer,yuan2023packetgame}.
In large-scale video processing, frame-by-frame protection can cause severe bottlenecks~\cite{yao2024secoinfer, yao2025privguardinfer, yuan2023packetgame}. 
A more efficient approach is to identify frames with privacy-sensitive objects for selective protection.

\textbf{Pixel-domain detectors incur additional decoding overhead.} 
Pixel-domain object detectors have proven effective across numerous applications~\cite{girshick2014rich,girshick2015fast,liu2016ssd,redmon2016you,tan2020efficientdet}, with efficient one-stage models like YOLO~\cite{redmon2016you} widely used for real-time video tasks.
However, since real-time video streams are typically compressed (e.g., H.264) for transmission, pixel-domain methods must fully decode each frame, which adds extra latency.

\textbf{Compressed-domain detectors suffer from information loss.}
Frame-level methods operate in the compressed domain, using features like motion vectors and residuals to avoid decoding and boost efficiency~\cite{jaballah2019fast,chen2021fast,shou2019dmc,ma2019effective,alizadeh2019compressed}.
However, the loss of structural details degrades detection accuracy.
Although Group of Pictures (GOP)-level techniques mitigate this by partially decoding Intra-coded frames (I-frames)~\cite{wu2018compressed,tran2023fast,wang2019fast}, frame-by-frame inference remains inefficient.

Therefore, there is an urgent need for a compressed-domain privacy object detection framework that minimizes decoding and inference overhead while preserving accuracy. 
To develop such a framework, we face the following challenges: \\
\textbf{(1) How to compensate for lost structural information.}
In compressed video, motion vectors encode motion trajectories, while residuals capture uncompensated details~\cite{sullivan2012overview}.
Thus, data in the compressed domain tends to preserve texture details while ignoring global and local image structure, which degrades the accuracy of privacy object detection.
\\
\textbf{(2) How to capture inter-frame correlations.}
Partially decoding I‑frames can mitigate information loss~\cite{wu2018compressed,tran2023fast,wang2019fast}.
Since predicted (P-) and bi-predicted frames (B-frames) do not always reference an I-frame, an efficient method is required to link each P- or B-frame to the appropriate I-frame.
\\
\textbf{(3) How to balance efficiency and accuracy.}
Solely using compressed-domain data improves efficiency but reduces accuracy.
Decoding and reusing I‑frame features can offset accuracy loss, but fusing pixel‑ and compressed‑domain features for each frame remains inefficient.
Selective reuse is needed to maintain high accuracy while preserving efficiency.
\\

To address these challenges, we propose ComPrivDet, a compressed-domain privacy detection framework.
To our knowledge, it is the first to introduce inference reuse for privacy detection acceleration.
Our contributions are as follows:

\ noindent~\textbullet~An efficient privacy object detection framework based on multi-model collaboration and inference reuse. 
We partially decode the few, highly variable I-frames for pixel-domain detection.
Leveraging inference reuse, we skip most low-change P/B-frames to accelerate inference, while applying a lightweight model only to the remaining high-change ones.

\noindent~\textbullet~An incremental inter-frame association method. 
Since we need to reuse I-frame inference for P- and B-frames, we introduce an incremental update algorithm, which accumulates motion vectors and residuals during video transmission. 
This reconstructs the motion-compensation relationship between each P/B-frame and the GOP's initial I-frame.

\noindent~\textbullet~An efficient frame-skip policy.
To mitigate potential accuracy loss when reusing I-frame inference, we design a decision-making mechanism based on inter-frame association.
By leveraging the presence of accumulated motion vectors and the specificity of accumulated residuals, we identify P/B-frames with regions hard to compensate from I-frames and decide whether to reuse I-frame results.

\noindent~\textbullet~
We prototype ComPrivDet and evaluate it on a mixed video dataset combining YouTube Faces~\cite{wolf2011face}, UFPR-ALPR~\cite{laroca2018robust}, and GOT-10k~\cite{huang2019got}.
On average, ComPrivDet achieves $1.36\%$ higher accuracy with $57.23\%$ lower inference latency than existing pixel-domain detectors, and $10.93\%$ higher accuracy with $44.65\%$ lower latency than compressed-domain methods.

\begin{figure*}[t]
    \centering
    \includegraphics[width=1.6\columnwidth]{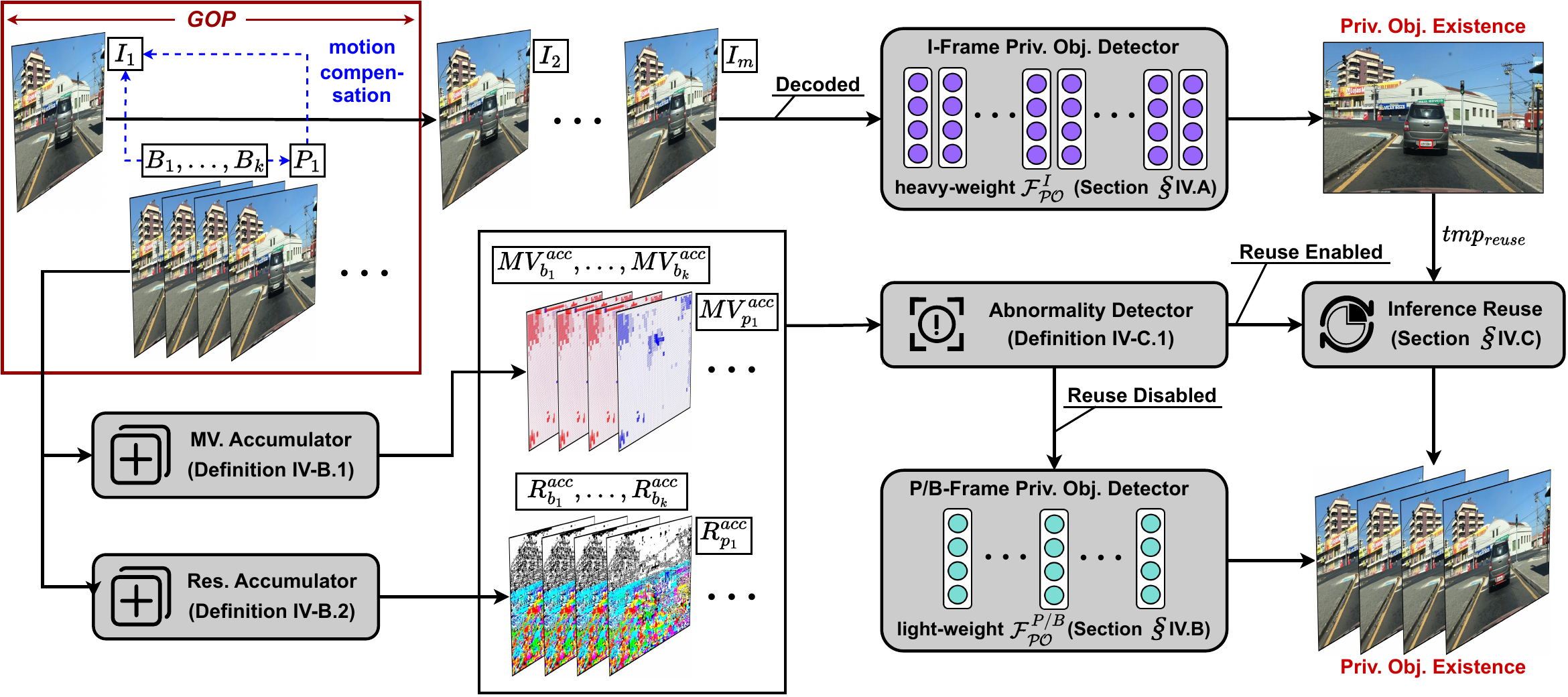}
    \caption{The System Overview of ComPrivDet.}
    \label{fig:system}
\end{figure*}

\section{Related Works}

\textbf{Pixel-Domain Object Detection} operates on fully decoded video frames, leveraging DNNs to extract rich spatial features for localization and classification.
\textit{Two-Stage Detectors} first generate region proposals, followed by feature extraction and classification of each candidate region.
Typical methods include: R-CNN~\cite{girshick2014rich}, which uses Selective Search with an AlexNet backbone; Fast R-CNN~\cite{girshick2015fast}, which extracts a shared feature map once for efficiency; Faster R-CNN~\cite{ren2015faster}, which introduces an RPN to unify proposal generation and detection.
\textit{One-Stage Detectors} directly extracts feature maps from input images with a backbone and simultaneously predicts bounding boxes and class labels.
Typical methods include: SSD~\cite{liu2016ssd}, which adopts VGG-16 as the backbone to extract features at six scales; YOLOv5~\cite{bochkovskiy2020yolov4}, which builds on CSPDarknet-53 and fuses multi-level features via PANet to improve multi-size detection; EfficientDet~\cite{tan2020efficientdet}, which uses EfficientNet and introduces BiFPN for bidirectional feature fusion.

\textbf{Compressed-Domain Object Detection} utilizes information in the video bitstream to detect objects with partially or fully undecoded frames.
\textit{Frame-Level Methods} detect objects on each compressed frame using bitstream cues such as motion vectors, residuals and prediction modes. 
Deep learning-based methods fuse compressed-domain signals into multi-scale feature maps~\cite{jaballah2019fast,chen2021fast} or approximated optical-flow~\cite{shou2019dmc}. Statistical estimation methods construct foreground-background masks to segment object regions~\cite{ma2019effective,alizadeh2019compressed}.
\textit{GOP-Level Methods} extract pixel-domain features from the I-frame using a high-capacity model, and transfer them to a lightweight detector on corresponding predicted frames.
Early work stacks ResNet-152 features onto a ResNet-18 to transfer knowledge~\cite{wu2018compressed}.
MMNet leverages an LSTM to model temporal dependencies and progressively propagate multi-scale features to P-frames~\cite{wang2019fast}.
SftRefNet transfers I-frame predictions to P-frames based on RoIAlign and motion vectors, then refines them with residual corrections~\cite{tran2023fast}.

\section{Problem Setup}
We focus on IoT scenarios that involve large-scale video transmission and processing, such as smart buildings and communities. These scenarios require numerous real-time tasks, including people counting and vehicle detection for property management.
However, users may be unwilling to expose private information, such as faces or license plates, to administrators with insufficient access rights.
Since privacy-sensitive objects appear only in partial video frames, protecting entire videos would result in considerable computational overhead. Therefore, efficiently identifying frames with privacy objects is a critical preprocessing step for privacy protection. 

Let $[F_1, F_2, ..., F_n]$ denote a sequence of consecutive frames captured by a smart camera.
During uploading to the edge server, these frames are compressed by video coding standards such as H.264 and HEVC, producing a compressed video stream $V = [F^{cmp}_1, ..., F^{cmp}_n]$.
To support downstream privacy protection, we require an efficient privacy object detection approach $\mathcal{F}_{\mathcal{PO}}$ to determine whether $F^{cmp}_i$ contains the privacy object $\mathcal{PO}$.
We formally define our objective as follows:

$$
\mathcal{F}_{\mathcal{PO}} = \arg\max_{F_{\mathcal{PO}}} [Acc(F_{\mathcal{PO}}) + \alpha \cdot Lat(F_{\mathcal{PO}})],
$$
$$
Exist_{\mathcal{PO}} = \mathcal{F}_{\mathcal{PO}}([F^{cmp}_1, ..., F^{cmp}_n]),
$$
where $\mathcal{F}_{\mathcal{PO}}$ denotes the optimal detection method balancing detection accuracy $Acc(\cdot)$ and inference latency $Lat(\cdot)$. 
It determines the presence of privacy object $\mathcal{PO}$, $Exist_{\mathcal{PO}}$, in each frame of the compressed video stream $V$.

\section{System Design}
In this section, we provide a detailed explanation of ComPrivDet, which detects privacy objects at GOP-level in compressed video streams.
First, the heavyweight detector $\mathcal{F}^{I}_{\mathcal{PO}}$ identifies privacy objects in decoded I-frames (Section IV.A).
Second, ComPrivDet establishes the correspondence between each P/B-frame and the initial I-frame by incrementally accumulating their motion vectors and residuals (Section IV.B).
Finally, after determining the abnormality of each P- or B-frame, ComPrivDet enhances detection efficiency by either reusing I-frame results (Section IV.C) or activating the lightweight detector $\mathcal{F}^{P/B}_{\mathcal{PO}}$ (Section IV.B).
The system overview of ComPrivDet is presented in Fig.~\ref{fig:system}.

\subsection{I-Frame Detection}
In compressed video streams, the appearance of an I-frame typically marks either the beginning of a periodic GOP or a significant scene change~\cite{sullivan2012overview}. 
In IoT scenarios with fixed cameras, a significant scene change often means a new object has appeared~\cite{yuan2023packetgame}. 
Motivated by this observation, we decode I-frames to obtain richer pixel-domain information for privacy object detection.
Although I‑frames make up only a small portion of the video stream, they are more likely to contain new objects.  
Thus, incurring some extra latency for more accurate detection is a reasonable trade‑off.
Given the lower deployment cost and higher inference efficiency compared to two-stage detectors, one-stage models are better suited to IoT scenarios with stringent real-time requirements~\cite{zou2023object}. 
Accordingly, we adopt YOLOv5~\cite{bochkovskiy2020yolov4} as the backbone of our I-frame privacy object detection model, $\mathcal{F}^{I}_{\mathcal{PO}}(F_i)$, where $F_i$ denotes a decoded I-frame.

\begin{figure}[t]
    \centering
    \includegraphics[width=0.9\columnwidth]{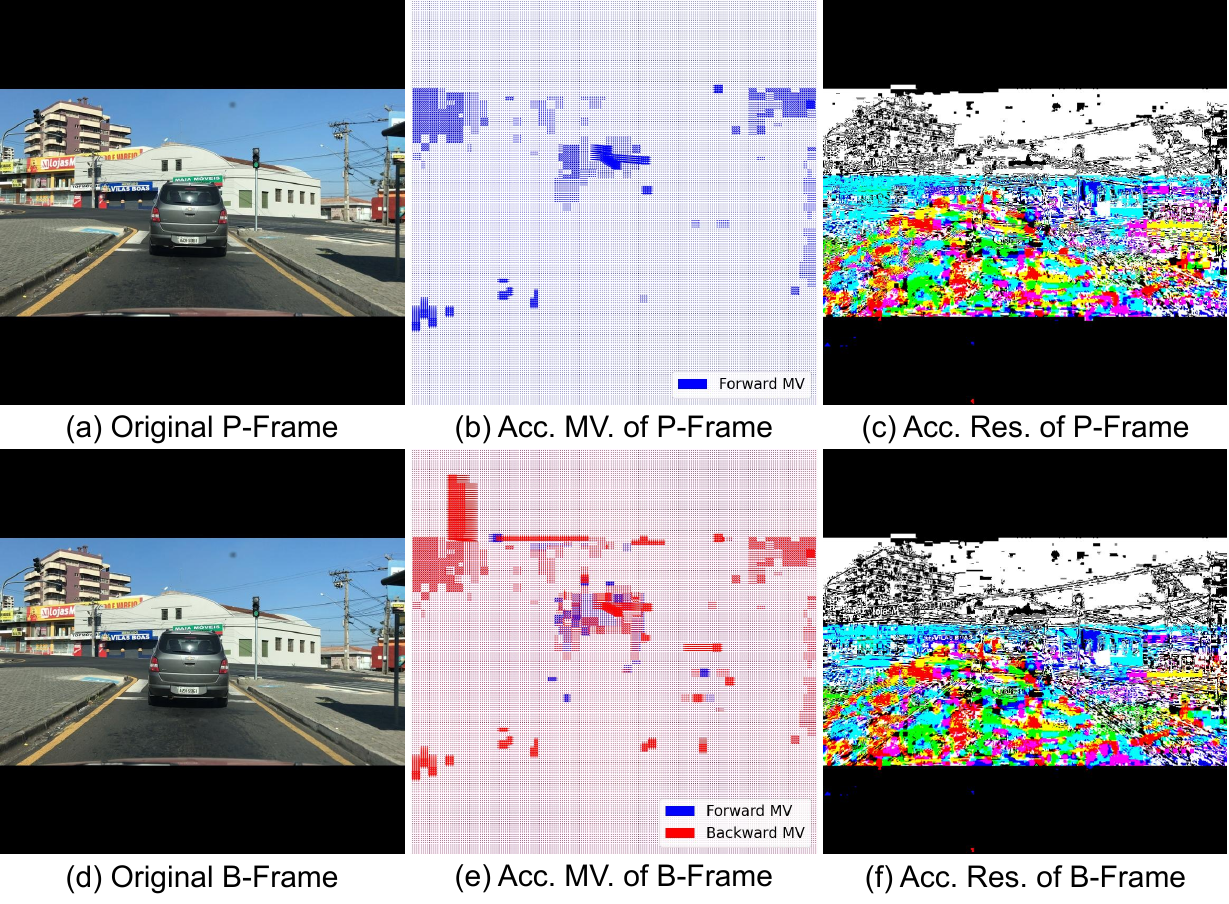}
    \caption{Examples of Accumulated Motion Vectors and Accumulated Residuals.}
    \label{fig:acc_mv_r}
\end{figure}

\subsection{P/B-Frame Detection}
In a compressed video stream, the GOP structure is typically fixed (e.g., I B B B P ...), unless a significant scene change occurs, prompting the insertion of an additional I-frame~\cite{sullivan2012overview}.
Consequently, the likelihood of new objects appearing in P- and B-frames is much lower than in I-frames.
Thus, we refrain from decoding P- and B-frames.
Instead, we leverage compressed-domain data with a lightweight ResNet18~\cite{he2016deep} for accelerated P- and B-frame detection.

In video compression, each prediction block in a P- or B-frame is associated with one or two blocks in reference frames through motion vector(s), while the residuals capture the portion not fully compensated~\cite{sullivan2012overview}.
Since reference frames are typically adjacent I- or P-frames, the original motion vectors and residuals cannot be directly mapped to the I-frame of a GOP.
To enable the reuse of I‑frame results, each P- or B-frame must be associated with the initial I-frame.

For a GOP shaped like $F^{cmp}_1(I),$ $F^{cmp}_2(B),$ $...,$ $F^{cmp}_g(P)$, let $MV^{4 \times \frac{H}{4} \times \frac{W}{4}}_i$ denote the motion vectors pointing to the forward and backward reference frames, and let $R^{3 \times H \times W}_i$ denote the residuals.
Through \textbf{Definitions~\ref{def:acc_mv}} and \textbf{\ref{def:acc_r}}, each P- and B-frame can be linked to the initial I-frame of the GOP through accumulated motion vectors and residuals, as illustrated in Fig.~\ref{fig:acc_mv_r}.
These values can be calculated incrementally during video transmission.
When the I-frame result cannot be reused, privacy object detection for each P/B-frame $F^{cmp}_i$ is performed as $\mathcal{F}^{P/B}_{\mathcal{PO}}(MV^{acc}_i, R^{acc}_i)$.

\begin{definition}[Accumulated Motion Vector $MV^{acc}_i$]
\label{def:acc_mv}
    \quad \\
    (1) For the first P-frame $F^{cmp}_2$, 
    $MV^{acc}_2 = MV_2.$
    
    (2) For any other P-frame $F^{cmp}_i$, assuming that $F^{cmp}_{fwd_i}$ is the nearest P-frame before $F^{cmp}_i$, then:
    $$
    MV^{acc}_i = MV_i + (MV^{acc}_{fwd_i} \circ (Id + (MV_i)_{c \in \{0, 1\}})),
    $$

    (3) For each B-frame $F^{cmp}_i$, assuming that $F^{cmp}_{fwd_i}$ are the nearest I/P-frame before and $F^{cmp}_{bwd_i}$ are the nearest P-frame after $F^{cmp}_i$, then:
    $$
    \begin{aligned}
    MV^{acc}_i = MV_i + (MV^{acc}_{fwd_i} \circ (Id + (MV_i)_{c \in \{0, 1\}})), \\
    MV^{acc}_i = MV_i + (MV^{acc}_{bwd_i} \circ (Id + (MV_i)_{c \in \{2, 3\}})). \\
    \end{aligned}
    $$
    
\end{definition}

\begin{definition}[Accumulated Residual $R^{acc}_i$]
\label{def:acc_r}
    \quad \\
    (1) For the first P-frame $F^{cmp}_2$, 
    $R^{acc}_2 = R_2.$
    
    (2) For any other P-frame $F^{cmp}_i$, assuming that $F^{cmp}_{fwd_i}$ is the nearest P-frame before $F^{cmp}_i$, then:
    $$
    R^{acc}_i = R_i + (R^{acc}_{fwd_i} \circ (Id + (MV_i)_{c \in \{0, 1\}})).
    $$

    (3) For each B-frame $F^{cmp}_i$, assuming that $F^{cmp}_{fwd_i}$ are the nearest I/P-frame before and $F^{cmp}_{bwd_i}$ are the nearest P-frame after $F^{cmp}_i$, then:
    $$
    \begin{aligned}
    R^{acc}_i = R_i & + 0.5 \cdot (R^{acc}_{fwd_i} \circ (Id + (MV_i)_{c \in \{0, 1\}})) \\
    & + 0.5 \cdot (R^{acc}_{bwd_i} \circ (Id + (MV_i)_{c \in \{2, 3\}})), \\
    \end{aligned}
    $$
    where $0.5$ is the default motion compensation weight.
\end{definition}

\subsection{Inference Reuse Mechanism}
In IoT scenarios with fixed-camera settings, the appearance of new objects often triggers new I-frames~\cite{sullivan2012overview}.
Therefore, the probability of new objects appearing in P- and B-frames is typically low, making it practical to reuse I-frame results in subsequent P- and B-frames.
However, a special case occurs when a new privacy object appears but is too small to trigger I-frame insertion, preventing direct reuse of I-frame results.
To address this, we introduce a mechanism that determines whether to reuse the I-frame result or invoke $\mathcal{F}^{P/B}_{\mathcal{PO}}(\cdot)$ on the current P- or B-frame.
Specifically, we define abnormal predicted frames in \textbf{Definition~\ref{def:ab_frame}}.
In this context, blocks with zero motion vectors and residuals exceeding $3\sigma$ flag regions poorly predicted by motion compensation. 
When their number exceeds the threshold $\tau_{ab}$, it typically indicates small new objects.
By checking whether the current P- or B-frame is an abnormal predicted frame, we decide whether to enable $\mathcal{F}^{P/B}_{\mathcal{PO}}(\cdot)$.
Finally, we formalize ComPrivDet’s capability in \textbf{Theorem~\ref{thm:capability_sys}}.

\begin{definition}[Abnormal Predicted Frame]
\label{def:ab_frame}
    We determine whether a P- or B-frame $F^{cmp}_i$ is abnormal based on its association with the initial I-frame as follows:
    $$
    \begin{aligned}
    & T_1 = (MV^{acc}_i == 0), \tilde{R}^{acc}_i = Downsample(R^{acc}_i,4), \\
    & T_2 = (\tilde{R}^{acc}_i < \mu - 3\sigma) | (\tilde{R}^{acc}_i > \mu + 3\sigma), \\
    & Abnormal(F^{cmp}_i) =
    \begin{cases}
    1, & \sum(T_1 \odot T_2) / size(T_1) > \tau_{ab}  \\
    0, & \text{otherwise}
    \end{cases}
    \end{aligned}
    $$
    
\end{definition}

\begin{theorem}[Capability of ComPrivDet]
\label{thm:capability_sys}
    Let $p_I$ and $p_{P/B}$ denote the probabilities of I- and P/B-frame occurrence, $p_{ab}$ the probability of an abnormal P/B-frame, and $p_{new}$ that of a new privacy object.
    We define $C_{I}$ and $C_{P/B}$ as the capability of $\mathcal{F}^I_\mathcal{OP}$ and $\mathcal{F}^{P/B}_\mathcal{OP}$, where capability represents a metric such as accuracy. 
    The overall capability $C_{sys}$ of ComPrivDet can then be expressed as follows:
    $$
    C_{sys} = p_I \cdot C_I + p_{P/B} [(1-p_{ab})(1 - p_{new}) C_I + p_{ab} \cdot C_{P/B}]
    $$
\end{theorem}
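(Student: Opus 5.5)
The plan is to prove Theorem~\ref{thm:capability_sys} with the law of total expectation, conditioning on the path a frame takes through ComPrivDet. Regard $C_{sys}$ as the expected per-frame capability, i.e.\ the probability that the detection decision $Exist_{\mathcal{PO}}$ for a randomly drawn frame is correct. The first partition is by frame type: a frame is an I-frame with probability $p_I$ or a P/B-frame with probability $p_{P/B}=1-p_I$. On an I-frame the system always decodes the frame and runs $\mathcal{F}^I_{\mathcal{PO}}$ (Section IV.A), so the conditional capability is $C_I$. This produces the first term.

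For a P/B-frame, condition next on the output of the abnormality test in Definition~\ref{def:ab_frame}. If the frame is abnormal (probability $p_{ab}$), the system runs $\mathcal{F}^{P/B}_{\mathcal{PO}}(MV^{acc}_i,R^{acc}_i)$ on the accumulated quantities of Definitions~\ref{def:acc_mv} and~\ref{def:acc_r}. Its conditional capability is $C_{P/B}$ by definition, which gives $p_{P/B}\,p_{ab}\,C_{P/B}$.

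If the frame is normal (probability $1-p_{ab}$), the system reuses the result of the GOP's initial I-frame. Split this branch once more by whether a new privacy object has appeared since that I-frame (probability $p_{new}$). With no new object (probability $1-p_{new}$), the ground-truth presence of $\mathcal{PO}$ agrees with the I-frame's. Hence the reused output is correct exactly when $\mathcal{F}^I_{\mathcal{PO}}$ was correct, which has conditional capability $C_I$.

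The main obstacle is the remaining case: a normal frame that contains a new object. Reuse then returns the stale I-frame decision. To match the stated formula, I would argue that this case contributes zero and state it as a modelling assumption: a newly appeared object is missed by reuse, so the frame counts as a failure. Two things are needed for this.

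\begin{itemize}
\item \emph{Assumption 1.} Conditioning on this branch, the I-frame's answer cannot be credited as correct. Object disappearances are folded into $p_{new}$, or treated as rare.
\item \emph{Assumption 2.} $p_{new}$ is independent of the abnormality flag, so the joint probability factorizes as $(1-p_{ab})(1-p_{new})$. Likewise, the detectors' capabilities are independent of the routing event, so conditional capabilities equal $C_I$ and $C_{P/B}$.
\end{itemize}

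I would state both explicitly as hypotheses of the model, since the abnormality test is designed precisely to correlate with new objects. Summing the three nonzero contributions then gives
\[
C_{sys}=p_I C_I+p_{P/B}\big[(1-p_{ab})(1-p_{new})C_I+p_{ab}C_{P/B}\big].
\]
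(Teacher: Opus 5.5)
Your proposal is correct and follows the same route as the paper: its proof simply matches the three terms $p_I$, $p_{P/B}(1-p_{ab})(1-p_{new})$ and $p_{P/B}\,p_{ab}$ to the three branches (I-frame detection, reuse when no new object appears, and lightweight detection on abnormal frames). Your total-expectation version spells out this case split more carefully, and it usefully states the modelling assumptions that the paper leaves implicit: the normal-frame-with-new-object branch is scored as zero, and $p_{new}$ and the detector capabilities are independent of the abnormality routing.
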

\begin{proof}
    The terms $p_I$, $p_{P/B} (1-p_{ab})(1 - p_{new})$, $p_{P/B} \cdot p_{ab}$ correspond to detecting I-frames using $\mathcal{F}^I_{\mathcal{PO}}$, reusing I-frame results when no new privacy objects appear, and detecting P- or B-frames using $\mathcal{F}^{P/B}_{\mathcal{PO}}$, respectively.
\end{proof}


\section{Experiments}

\subsection{Experiment Setup}
We construct our evaluation dataset for ComPrivDet by combining the YouTube Faces Database~\cite{wolf2011face} (YTF), the UFPR-ALPR~\cite{laroca2018robust} dataset, and the Generic Object Tracking Benchmark~\cite{huang2019got} (GOT-10k). 
We define privacy objects as entities that reveal personal information about users. In the three datasets, these correspond to faces and license plates. 
Since UFPR-ALPR is smaller than YTF, we upsample UFPR-ALPR to balance the face and license plate streams. 
We then randomly sample from GOT-10k to construct a balanced dataset with a 1:1:2 ratio of face to license plate to other streams. 
After cropping all videos to $768 \times 768$ pixels, we split them 4:1 into training and test sets, ensuring no copies of the same video stream appear in both sets.
Additionally, all experiments are conducted on an edge server equipped with an Intel(R) Xeon(R) Silver 4210R CPU @ 2.40 GHz and an NVIDIA GeForce RTX 3090 GPU.

\begin{table*}[t]
    \centering
    \caption{Evaluation of Accuracy, Reuse Ratio, Latency (ms) under Different $\tau_{conf}$-$\tau_{ab}$ Settings for Private Face Detection.}
    \resizebox{0.8\linewidth}{!}{
        \begin{tabular}{c|c|c|c|c|c|c|c|c|c|c|c|c|c|c|c}
        \toprule
        \midrule
        $\tau_{ab}$ & \multicolumn{3}{c|}{4e-3} & \multicolumn{3}{c|}{8e-3} & \multicolumn{3}{c|}{1.2e-2} & \multicolumn{3}{c|}{1.6e-2} & \multicolumn{3}{c}{2e-2} \\
        \midrule
        $\tau_{conf}$ & Acc & RR & Lat & Acc & RR & Lat & Acc & RR & Lat & Acc & RR & Lat & Acc & RR & Lat  \\
        \midrule
        0.1 & 0.9569 &  & 63.87 & 0.9661 &  & 53.82 & 0.9811 &  & 40.16 & 0.9894 &  & 27.83 & 0.9975 &  & 23.55 \\
        \cmidrule{1-2}
        \cmidrule{4-5}
        \cmidrule{7-8}
        \cmidrule{10-11}
        \cmidrule{13-14}
        \cmidrule{16-16}
        0.2 & 0.9564 &  & 61.44 & 0.9656 &  & 54.02 & 0.9750 &  & 39.60 & 0.9811 &  & 26.81 & 0.9889 &  & 23.45 \\
        \cmidrule{1-2}
        \cmidrule{4-5}
        \cmidrule{7-8}
        \cmidrule{10-11}
        \cmidrule{13-14}
        \cmidrule{16-16}
        0.3 & 0.9564 & 26.22\% & 59.24 & 0.9656 & 41.58\% & 52.14 & 0.9750 & 65.06\% & 36.48 & 0.9811 & 77.83\% & 24.43 & 0.9889 & 84.75\% & 21.31 \\
        \cmidrule{1-2}
        \cmidrule{4-5}
        \cmidrule{7-8}
        \cmidrule{10-11}
        \cmidrule{13-14}
        \cmidrule{16-16}
        0.4 & 0.9564 &  & 57.98 & 0.9656 &  & 50.56 & 0.9750 &  & 35.63 & 0.9811 &  & 24.35 & 0.9889 &  & 20.38 \\
        \cmidrule{1-2}
        \cmidrule{4-5}
        \cmidrule{7-8}
        \cmidrule{10-11}
        \cmidrule{13-14}
        \cmidrule{16-16}
        0.5 & 0.9561 &  & 56.63 & 0.9644 &  & 46.62 & 0.9689 &  & 32.76 & 0.9731 &  & 24.59 & 0.9806 &  & 19.55 \\
        \midrule
        \bottomrule
        \end{tabular}
    }
    \label{tab:ablation_face}
\end{table*}

\begin{table*}[t]
    \centering
    \caption{Evaluation of Accuracy, Reuse Ratio, Latency (ms) under Different $\tau_{conf}$-$\tau_{ab}$ Settings for Private License Plate Detection.}
    \resizebox{0.8\linewidth}{!}{
        \begin{tabular}{c|c|c|c|c|c|c|c|c|c|c|c|c|c|c|c}
        \toprule
        \midrule
        $\tau_{ab}$ & \multicolumn{3}{c|}{4e-3} & \multicolumn{3}{c|}{8e-3} & \multicolumn{3}{c|}{1.2e-2} & \multicolumn{3}{c|}{1.6e-2} & \multicolumn{3}{c}{2e-2} \\
        \midrule
        $\tau_{conf}$ & Acc & RR & Lat & Acc & RR & Lat & Acc & RR & Lat & Acc & RR & Lat & Acc & RR & Lat  \\
        \midrule
        0.1 & 0.9172 &  & 61.18 & 0.9219 &  & 55.92 & 0.9256 &  & 36.16 & 0.9442 &  & 27.74 & 0.9600 &  & 23.09 \\
        \cmidrule{1-2}
        \cmidrule{4-5}
        \cmidrule{7-8}
        \cmidrule{10-11}
        \cmidrule{13-14}
        \cmidrule{16-16}
        0.2 & 0.9256 &  & 61.16 & 0.9303 &  & 51.77 & 0.9339 &  & 36.53 & 0.9525 &  & 26.99 & 0.9683 &  & 21.62 \\
        \cmidrule{1-2}
        \cmidrule{4-5}
        \cmidrule{7-8}
        \cmidrule{10-11}
        \cmidrule{13-14}
        \cmidrule{16-16}
        0.3 & 0.9250 & 26.22\% & 59.67 & 0.9264 & 41.58\% & 48.29 & 0.9300 & 65.06\% & 35.58 & 0.9442 & 77.83\% & 25.91 & 0.9600 & 84.75\% & 22.28 \\
        \cmidrule{1-2}
        \cmidrule{4-5}
        \cmidrule{7-8}
        \cmidrule{10-11}
        \cmidrule{13-14}
        \cmidrule{16-16}
        0.4 & 0.9167 &  & 58.07 & 0.9181 &  & 48.42 & 0.9217 &  & 35.71 & 0.9358 &  & 25.47 & 0.9517 &  & 19.22 \\
        \cmidrule{1-2}
        \cmidrule{4-5}
        \cmidrule{7-8}
        \cmidrule{10-11}
        \cmidrule{13-14}
        \cmidrule{16-16}
        0.5 & 0.9167 &  & 56.03 & 0.9181 &  & 47.34 & 0.9217 &  & 33.57 & 0.9358 &  & 25.49 & 0.9517 &  & 19.93 \\
        \midrule
        \bottomrule
        \end{tabular}
    }
    \label{tab:ablation_plate}
\end{table*}

\subsection{Ablation Experiments}

\subsubsection{Evaluating Threshold Variations}

In object detection, candidate boxes with confidence scores below a threshold $\tau_{conf}$ are filtered out.
Since privacy object detection is a specific object detection task, and $\tau_{ab}$ controls ComPrivDet’s sensitivity in skipping abnormal P/B-frames, we evaluate its performance under different $\tau_{conf}$-$\tau_{ab}$ settings.
We define the accuracy of privacy object detection as $\frac{1}{N}\sum^N_{i=1} \mathbb{I}(\mathcal{F}_{\mathcal{PO}}(F^{cmp}_i) = y^{gt}_i)$, where $y^{gt}_i$ denotes the ground truth and $\mathbb{I}(\cdot)$ the indicator function.
Our experimental results are presented in TABLE~\ref{tab:ablation_face} and \ref{tab:ablation_plate}.

In both face and license plate detection, increasing $\tau_{ab}$ leads to higher accuracy and lower latency.
Higher $\tau_{ab}$ makes ComPrivDet mark fewer P- and B-frames as abnormal, enabling more I‑frame result reuse and faster detection, reflected by the increased reuse ratio (RR) reported in TABLE~\ref{tab:ablation_face} and \ref{tab:ablation_plate}.
As $\tau_{ab}$ increases, ComPrivDet achieves lower inference latency and higher detection accuracy. 
Although counterintuitive, this can be explained by \textbf{Theorem~\ref{thm:capability_sys}}: The I-frame detector $\mathcal{F}^I_\mathcal{PO}$, which operates on decoded I-frames with richer information, has greater capability $C_I$ than the lightweight P/B-frame model $\mathcal{F}^{P/B}_\mathcal{PO}$ with capability $C_{P/B}$. 
As a larger $\tau_{ab}$ increases the reuse rate of I-frame results, the probability of abnormal P/B-frames, $p_{ab}$, decreases. If the probability $p_{new}$ of a new privacy object appearing is small enough, such that the gain in $(1-p_{ab})(1-p_{new})C_I$ exceeds the loss in $p_{ab} \cdot C_{P/B}$, then the overall capability $C_{sys}$ will improve.

Furthermore, in both face and license plate detection, increasing $\tau_{conf}$ generally reduces ComPrivDet’s latency. This is because a higher $\tau_{conf}$ filters more candidate boxes, thereby lowering analysis overhead.
However, in certain cases, a larger $\tau_{conf}$ leads to longer latency (e.g., $\tau_{ab}=8e^{-3}$, $\tau_{conf}=0.1$ vs. $0.2$ in face detection; $\tau_{ab}=1.2e^{-2}$, $\tau_{conf}=0.3$ vs. $0.4$ in license plate detection). We attribute this anomaly to CPU fluctuations.
Notably, ComPrivDet’s detection accuracy shows no consistent correlation with $\tau_{conf}$. 

\subsubsection{Evaluating Detection Latency Components}

ComPrivDet’s inference reuse mechanism introduces extra time overhead.
Since excessive extra latency limits ComPrivDet’s applicability, we evaluate how much this overhead contributes to the total latency.
Specifically, we fix $\tau_{conf} = 0.1$ for private face detection and $\tau_{conf} = 0.2$ for private license plate detection, corresponding to their optimal settings. Results are shown in Fig.~\ref{fig:lat_ratio_face} and  \ref{fig:lat_ratio_plate}.

In private face detection, abnormal P/B-frame detection adds only $0.59\%$–$2.66\%$ to total inference latency, and even less ($0.36\%$–$1.02\%$) in license plate detection.
This is because, as defined in \textbf{Definition~\ref{def:ab_frame}}, this process involves just a few lightweight computations—three logical and two pooling.
Although ComPrivDet employs efficient YOLOv5 for $\mathcal{F}^I_{\mathcal{PO}}$ and lightweight ResNet18 for $\mathcal{F}^{P/B}_{\mathcal{PO}}$, its inference reuse mechanism adds under $3\%$ additional latency while enabling over $80\%$ frame skipping.
Since the overhead of abnormality detection is negligible, ComPrivDet can be easily extended to more complex detection architectures.

\begin{figure}[t]
  \centering
  \subfigure[Private Face Detection]{
    \includegraphics[width=0.46\linewidth]{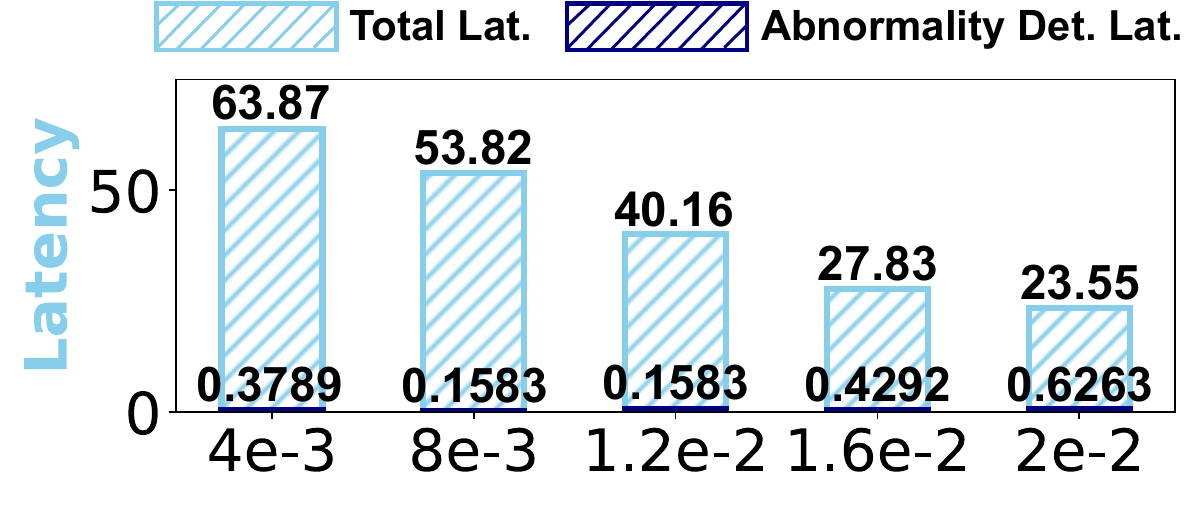}
    \label{fig:lat_ratio_face}
  }
  \subfigure[Private License Plate Detection]{
    \includegraphics[width=0.46\linewidth]{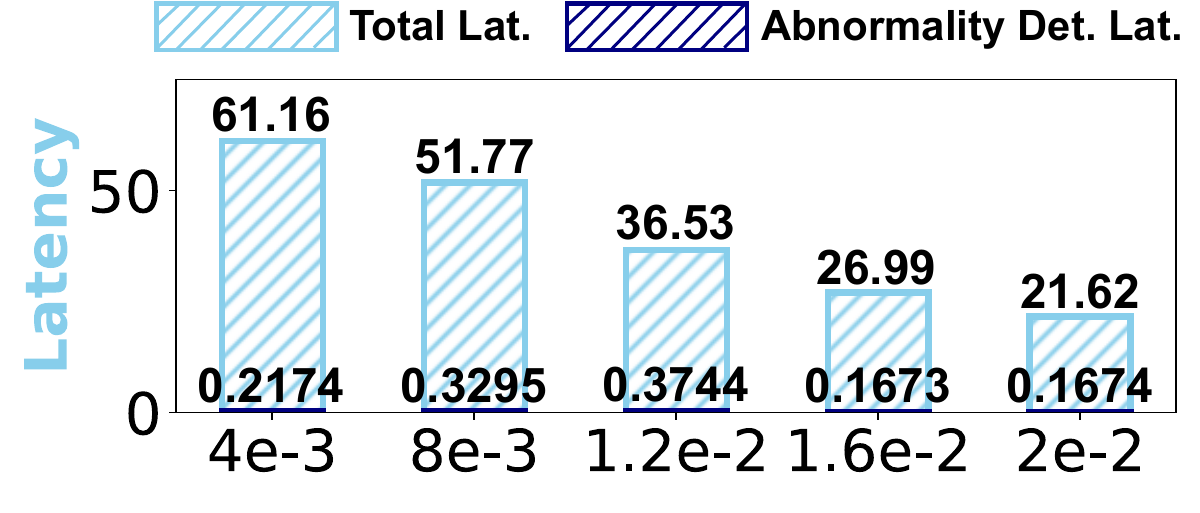}
    \label{fig:lat_ratio_plate}
  }
  \caption{Evaluation of Abnormality Determination Latency Ratio.}
  \label{fig:lat_ratio}
\end{figure}

\begin{figure}[t]
  \centering
  \subfigure[Private Face Detection]{
    \includegraphics[width=0.46\linewidth]{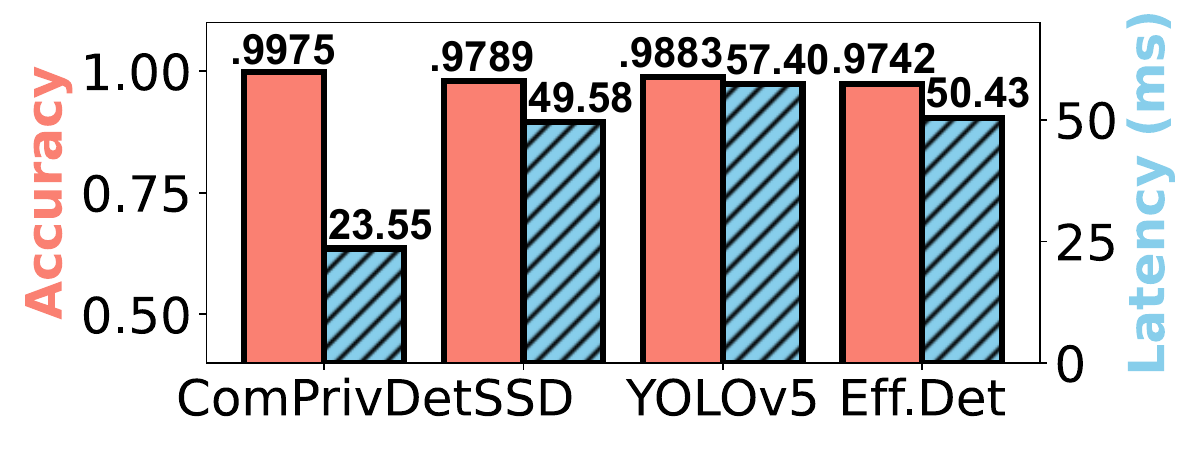}
    \label{fig:cmp_pixel_face}
  }
  \subfigure[Private License Plate Detection]{
    \includegraphics[width=0.46\linewidth]{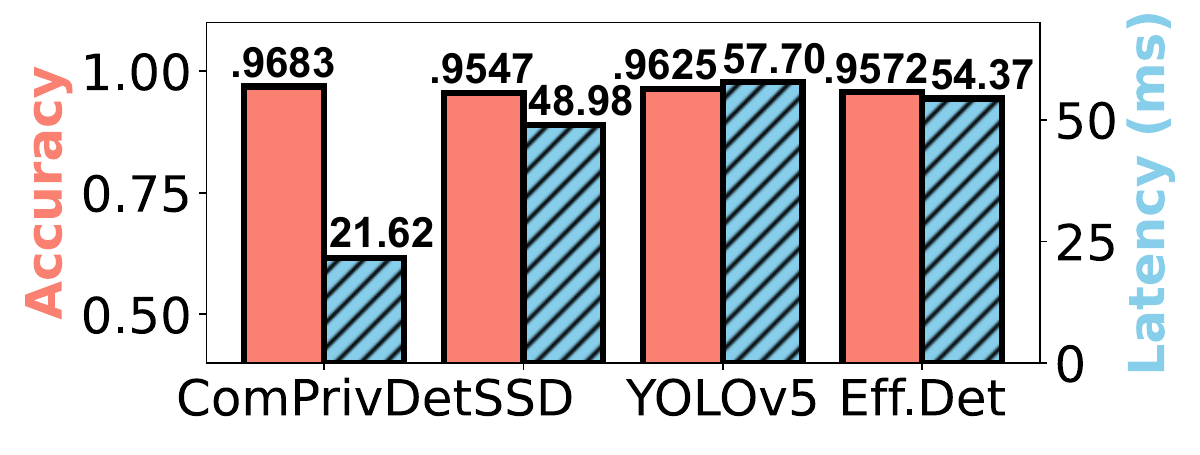}
    \label{fig:cmp_pixel_plate}
  }
  \caption{Comparison with Existing Pixel-Domain Detectors}
  \label{fig:cmp_pixel}
\end{figure}

\subsection{Comparative Experiments}

\subsubsection{Comparison with Existing Pixel-Domain Methods}

One‑stage detectors skip region proposals, offering better efficiency for real‑time IoT scenarios. Thus, we select SSD~\cite{liu2016ssd}, YOLOv5~\cite{bochkovskiy2020yolov4}, and EfficientDet (Eff.Det)~\cite{tan2020efficientdet} as pixel-domain baselines to compare with ComPrivDet on detection capability and efficiency.
Results are presented in Fig.~\ref{fig:cmp_pixel}. 
For private face detection (Fig.~\ref{fig:cmp_pixel_face}), ComPrivDet achieves $0.92\%$-$2.33\%$ higher accuracy, while its inference latency is only $41.03\%$-$47.50\%$ of theirs. 
Similarly, for private license plate detection (Fig.~\ref{fig:cmp_pixel_plate}), ComPrivDet gains $0.58\%$-$1.36\%$ accuracy, with latency reduced to $37.47\%$ to $44.14\%$.

ComPrivDet’s efficiency advantage stems from two factors: it skips detection for most P/B‑frames via inference reuse and only decodes I‑frames.
Surprisingly, ComPrivDet achieves higher accuracy than pixel-domain detectors that perform on fully decoded frames. 
Based on \textbf{Theorem~\ref{thm:capability_sys}} and our manual review, we attribute this to two factors:

\noindent~\textbullet~The lightweight $\mathcal{F}^{P/B}_\mathcal{PO}$ using limited features, has lower capability $C{P/B}$ than the I-frame detector $\mathcal{F}^{I}_\mathcal{PO}$ with $C_I$.
Thus, reusing I-frame results often corrects errors from $\mathcal{F}^{P/B}_\mathcal{PO}$.

\noindent~\textbullet~As compressed-domain cues mainly capture object shapes, they effectively distinguish moving objects from the background in abnormal frames. 
This enables $\mathcal{F}^{P/B}_\mathcal{PO}$ to compensate for errors made by $\mathcal{F}^{I}_\mathcal{PO}$ in some cases.

\begin{figure}[t]
  \centering
  \subfigure[Private Face Detection]{
    \includegraphics[width=0.46\linewidth]{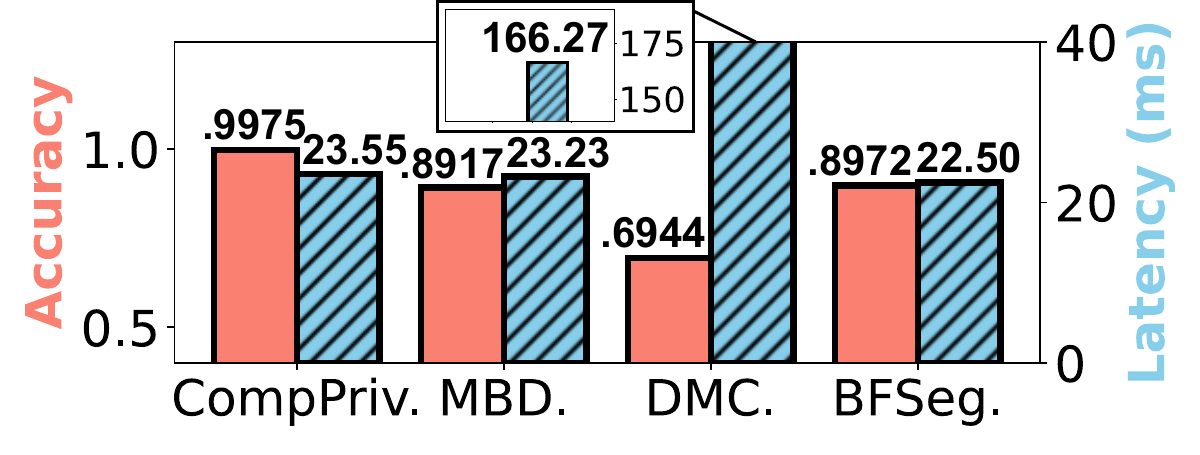}
    \label{fig:cmp_comp_frameLv_face}
  }
  \subfigure[Private License Plate Detection]{
    \includegraphics[width=0.46\linewidth]{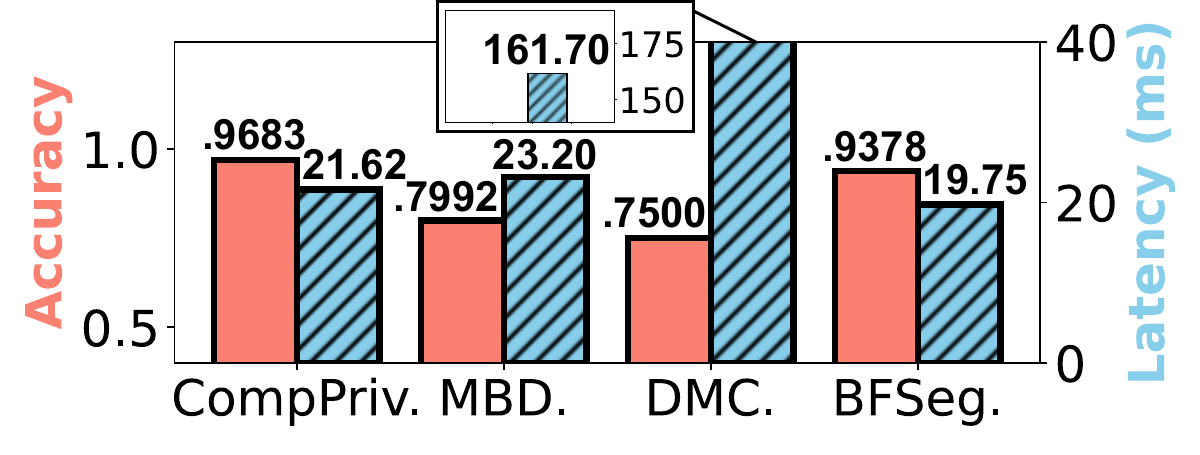}
    \label{fig:cmp_comp_frameLv_plate}
  }
  \caption{Comparison with Existing Compressed-Domain Frame-Level Detectors}
  \label{fig:cmp_comp_frameLv}
\end{figure}

\begin{figure}[t]
  \centering
  \subfigure[Private Face Detection]{
    \includegraphics[width=0.46\linewidth]{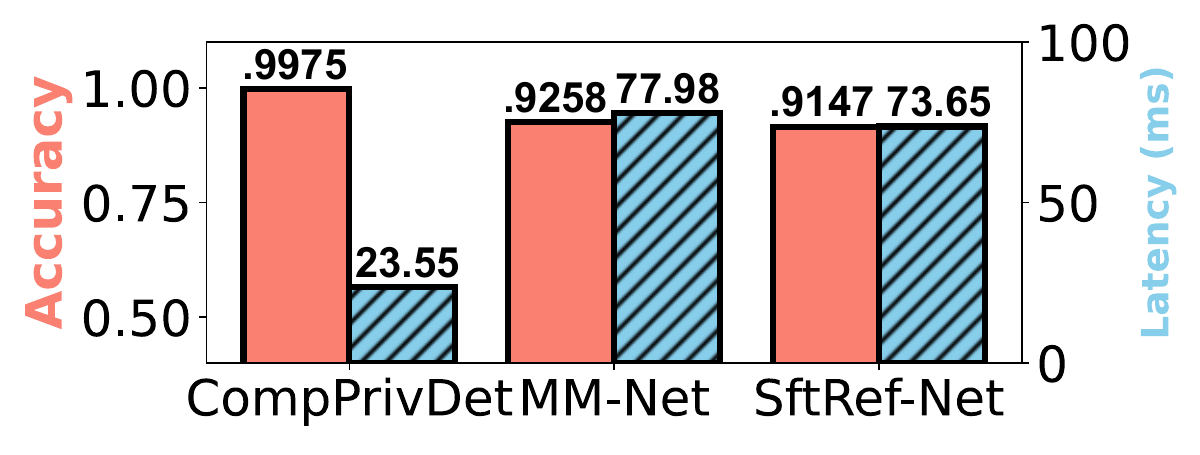}
    \label{fig:cmp_comp_GOPLv_face}
  }
  \subfigure[Private License Plate Detection]{
    \includegraphics[width=0.46\linewidth]{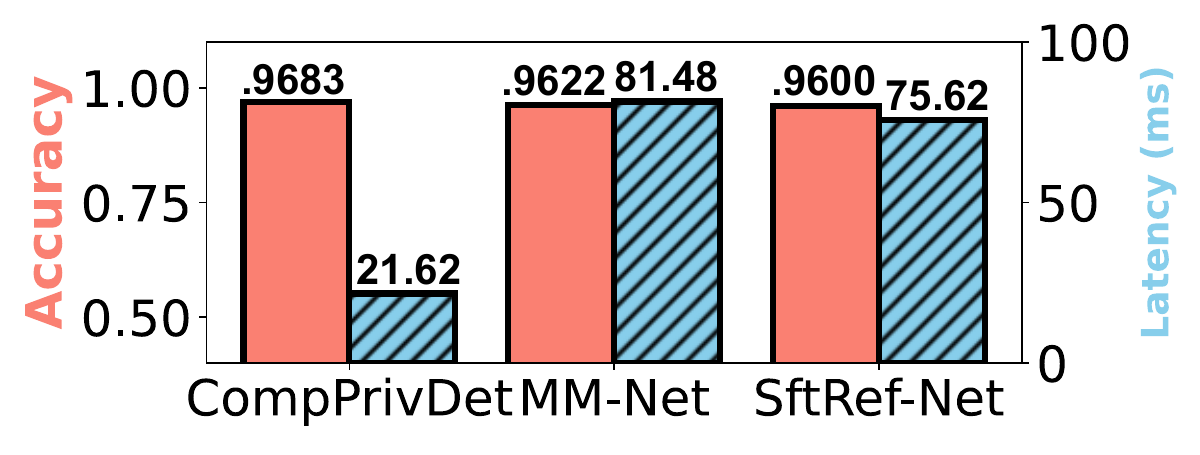}
    \label{fig:cmp_comp_GOPLv_plate}
  }
  \caption{Comparison with Existing Compressed-Domain GOP-Level Detectors}
  \label{fig:cmp_comp_GOPLv}
\end{figure}

\subsubsection{Comparison with Existing Frame-Level Methods}

We compare ComPrivDet with three representative frame‑level compressed‑domain detectors: MCD-Net~\cite{jaballah2019fast} (object masks), DMC-Net~\cite{shou2019dmc} (optical-flow approximation), and BFSeg-Net~\cite{ma2019effective} (statistical estimation).
As shown in Fig.~\ref{fig:cmp_comp_frameLv}, GOP‑level ComPrivDet achieves comparable latency to these methods, differing by only $-6.81\%$-$9.47\%$ in private face and license plate detection.
An exception is DMC-Net, whose latency is seven times higher due to its expensive optical‑flow estimation and heavyweight ResNet‑152 backbone.
Although ComPrivDet incurs additional time from I-frame decoding, this cost is offset by inference reuse. Furthermore, leveraging rich pixel-domain data from decoded I-frames allows ComPrivDet to achieve much higher accuracy than existing frame-level methods—improving private face detection by $10.03\%$-$30.31\%$ and private license plate detection by $3.05\%$-$21.83\%$.

\subsubsection{Comparison with Existing GOP-Level Methods}

We compare ComPrivDet with two representative GOP‑level baselines: (1) SftRef-Net~\cite {tran2023fast}, which transfers I-frame features with RoIAlign and refines them using residuals; (2) MM-Net~\cite{wang2019fast}, which fuses multi-scale I-frame features and uses an LSTM for transfer.
As shown in Fig.~\ref{fig:cmp_comp_GOPLv}, ComPrivDet surpasses them by $7.17\%$-$8.28\%$ in private face detection accuracy and $0.61\%$-$0.83\%$ in private license plate detection accuracy.
We attribute this improvement to ComPrivDet’s YOLOv5 backbone, which outperforms MM-Net's ResNet-101 and SftRef-Net's SSD backbones. 
The smaller gains in license plate detection stem from plates covering limited image areas, making them hard to detect for all methods.

In addition, ComPrivDet achieves much higher inference efficiency than GOP-level baselines. Its inference latency is only $30.20\%$-$31.98\%$ for private face detection and $26.53\%$-$28.59\%$ for private license plate detection.
This efficiency advantage comes from:
(1) ComPrivDet selectively reuses I-frame results to skip P/B-frame inference, while MM-Net and SftRef-Net still require explicit inference. 
(2) $\mathcal{F}^{P/B}_\mathcal{PO}$ relies solely on compressed-domain data, whereas MM-Net and SftRef-Net also incorporate transferred I-frame features.

\section{Conclusion}
In this paper, we present ComPrivDet, an efficient framework for privacy object detection in compressed video-the first to apply inference reuse at scale.
ComPrivDet partially decodes I-frames to extract critical GOP features for a pixel-domain detector.
During streaming, while incrementally updating motion vectors and residuals to track P/B‑frame changes.
It skips lightly changing P/B-frames and applies a lightweight model to the rest, greatly accelerating inference.
On a large-scaledataset, ComPrivDet achieves $1.36\%$ higher accuracy and $57.23\%$ lower latency than pixel‑domain detectors,
and $9.84\%$ higher accuracy with $75.95\%$ lower latency than compressed‑domain methods.
Our Future work will extend ComPrivDet to multi‑object scenarios, enabling flexible, mix‑and‑match privacy processing.



\bibliographystyle{IEEEbib}
\bibliography{icme2026references}

\end{document}